\newtheorem{definition}{Definition}
\newtheorem{theorem}{Theorem}
\newtheorem{proposition}{Proposition}
\newtheorem{constr}{Construction}
\newtheorem{lemma}{Lemma}
\newtheorem{corollary}{Corollary}
\newcommand \com[1] {\ensuremath{\mathtt{#1}}}
\begin{document}


\title{Estimating the hardness of SAT encodings for Logical Equivalence Checking of Boolean circuits}


\author{Alexander Semenov* \and Konstantin Chukharev \and  Egor Tarasov \and Daniil Chivilikhin \and Viktor Kondratiev
\\
ITMO University, St. Petersburg, Russia\\
\href{mailto: alex.a.semenov@itmo.ru}{\texttt{ alex.a.semenov@itmo.ru}}}

\maketitle              

\begin{abstract}
In this paper we investigate how to estimate the hardness of Boolean satisfiability (SAT) encodings for the Logical Equivalence Checking problem (LEC). Meaningful estimates of hardness are important in cases when a conventional SAT solver cannot solve a SAT instance in a reasonable time.
We show that the hardness of SAT encodings for LEC instances can be estimated \textit{w.r.t.} some SAT partitioning.
We also demonstrate the dependence of the accuracy of the resulting estimates on the probabilistic characteristics of a specially defined random variable associated with the considered partitioning.
The paper proposes several methods for constructing partitionings, which, when used in practice, allow one to estimate the hardness of SAT encodings for LEC with good accuracy.
In the experimental part we propose a class of scalable LEC tests that give extremely complex instances with a relatively small input size $n$ of the considered circuits. For example, for $n = 40$, none of the state-of-the-art SAT solvers can cope with the considered tests in a reasonable time.
However, these tests can be solved in parallel using the proposed partitioning methods.
\end{abstract}

\section{Introduction}
Boolean circuits are widely used in theoretical computer science~\cite{arora2009,goldreich2008} as well as in numerous industrial applications.
It would take too much space to list all the key references regarding the various practical applications of Boolean circuits.
We only note that each hardware implementation of an arbitrary discrete function (\ie function $f \colon \{0, 1\}^n \to \{0, 1\}^m$) can be viewed as some Boolean circuit, entailing the development of such a colossal industry as Electronic Design Automation~(EDA).

One of the main problems related to Boolean circuits is the logical equivalence checking problem (LEC)~\cite{kuehlmann1997,molitor2004}.
This problem is posed as follows: there are two circuits $S_f, S_h$ specifying some functions $f,h \colon \{0, 1\}^n \to \{0, 1\}^m$.
The question is: \enquote{Is it true that $f$ and $h$ are equal, \ie point-wise equality $f \cong h$ holds?}.
At the initial stage of development of formal verification methods, Binary Decision Diagrams (BDD)~\cite{bryant1986} were used to solve LEC.
Works~\cite{biere1999,biere1999a} argued in favor of solving LEC via applying complete SAT solvers based on the CDCL algorithm~\cite{marques-silva2009};
currently, LEC is mainly solved with such algorithms: a good example is the ABC~\cite{BraytMisch10} framework.

SAT solvers work with Boolean formulas in Conjunctive Normal Form (CNF).
There is an algorithm linear in the size of circuits $S_f,S_h$ that reduces LEC for these circuits to SAT for a CNF formula using Tseytin transformations~\cite{tseytin1970}.

Unfortunately, SAT for a CNF formula which encodes LEC for $S_f$ and $S_h$ can be difficult for state-of-the-art SAT solvers.
If we use a sequential solver, in many cases we cannot even say how much time can be required for solving the corresponding SAT instance.
Prediction of runtime for modern SAT solvers is very difficult in the general case due to their heavy-tailed behavior~\cite{GomesSabh09}.

The main goal of this paper is to show that the hardness of a SAT instance which 
encodes some LEC problem can be estimated by decomposing this instance into
a family of simpler SAT instances.
In this context we introduce the notion of hardness of formula w.r.t.
some SAT partitioning. 
We show that this hardness measure can be expressed via an expected
value of a special random variable which is associated with a
considered SAT partitioning.
To estimate this measure we use the Monte Carlo method. 
The main issue of this approach is that the corresponding
Monte Carlo estimation can be not accurate enough. 
We study the problem how to construct a partitioning of a CNF formula 
encoding some LEC problem, which gives a hardness estimation of this
formula with high accuracy.
We propose two partitioning construction methods which 
rely on the structure of considered circuits
and justify the good properties of proposed construction methods
in application to extremely hard LEC instances.
In particular, using a computing cluster we solved 
the LEC instance which turned to be too hard for sequential SAT solvers
which won the SAT Competitions of the last years.

\section{Preliminaries}
In this section, we introduce the necessary formal concepts and notation.

\subsection{Satisfiability and Boolean circuits}

We start from basic concepts related to SAT, the Boolean Satisfiability problem~\cite{HandBook09}.
In the context of SAT one usually works with a Boolean formula in CNF.

Let $C$ be an arbitrary CNF formula and $X$ be the set of Boolean variables occurring in~$C$.
An \emph{assignment} of variables from $X$ is a mapping $\alpha \colon X \rightarrow \{0,1\}$.
The set of all different assignments of variables from~$X$ is denoted as $\{0,1\}^{|X|}$ and called \emph{Boolean hypercube} of dimension $n$, $n = |X|$.

In the context of SAT, for an arbitrary CNF formula~$C$ it is required to answer the following question: is it true that $C$ is satisfiable?
That is, is there as assignment of variables from $X$ for which $C$ is evaluated to \emph{true}?
In this formulation, SAT is NP-complete, and it is NP-hard when one has to detect the satisfiability of $C$ and, in the case of a positive answer, to find some satisfying assignment.
Despite the theoretical hardness of SAT, the last 20 years demonstrate impressive progress in the development of SAT solving algorithms with a wide spectrum of practical applications in symbolic verification, computational combinatorics, bioinformatics, cryptanalysis, etc.
One of the most striking examples is hardware verification and, in particular, Logical Equivalence Checking~(LEC).
As it was said above, in LEC one has to answer the following question: is it true that two Boolean circuits are equivalent?

As in the majority of related articles, we regard a \emph{Boolean circuit} as some directed acyclic graph.
Consequently, we use the following standard graph theory definitions.
A (directed) \emph{graph} $G = (V,E)$ consists of a set of vertices~$V$ and a set of (directed) edges~$E \subseteq V^2$.
An~\emph{edge} is a pair of connected vertices.
An~\emph{arc} is a directed edge, \ie an ordered pair of vertices.
For each arc $(u,v) \in E$, vertex~$u$ is called a \emph{parent} of~$v$, and $v$~is called a \emph{child} of~$u$.
The set of all parents of a vertex~$v$ is denoted by~$P_v$.
The \emph{indegree} of a vertex~$v$ is the number of parents of~$v$, and the \emph{outdegree} is the number of children.
A~vertex is called an \emph{input} if it has no parents, and an \emph{output} if it has no children.
The sets of inputs and outputs are denoted as $V^{in} \subset V$ and $V^{out} \subset V$ respectively.
A~\emph{path} is a sequence of arcs.
A~vertex~$u$ is called a \emph{predecessor} of~$v$ if there is a path from $u$ to~$v$.
A~predecessor~$u$ which is also an input ($u \in \Vin$) is called an \emph{ancestor} of~$v$.
The set of all ancestors of a vertex~$v$ is denoted by~$A_v$ (\emph{ancestor set}).

A Boolean circuit with $n$ inputs and $m$ outputs can be viewed as a natural way of specifying some discrete function $f \colon \{0, 1\}^n \to \{0, 1\}^m$.
Implying this, we will denote an arbitrary Boolean circuit defining a discrete function $f$ as $S_f = (V,E)$.

Let $S_f$ be an arbitrary Boolean circuit.
Any vertex $v \in V \setminus V^{in}$ is called a \emph{gate}.
Each gate is associated with some \emph{logical connective} from a predefined set called a \emph{basis} (for example it can be $\{\land, \lnot\}$, $\{\land, \lor, \lnot\}$, $\{\land, \xor, 1\}$, etc.).
An example of a graphical representation of a Boolean circuit with $|V^{in}| = 3$ inputs and $|V \setminus V^{in}| = 8$ gates is shown in~\cref{fig:boolean-circuit-example}.

\begin{figure}[!htb]
    \centering
    \begin{adjustbox}{max width=\linewidth}
    
\begin{tikzpicture}[
    auto,
    on grid,
    >={Stealth[]},
    gate/.style={
        draw,
        circle,
        minimum size=6mm,
    },
]
    \newcounter{mynode}
    \newcommand{\nextnode}{%
        \stepcounter{mynode}%
        \arabic{mynode}%
    }
    
    
    \node[gate, label=center:$i_1$] (i1) {};
    \node[gate, label=center:$i_2$] (i2) [right=of i1] {};
    \node[gate, label=center:$i_3$] (i3) [right=of i2] {};
    \node[gate, label=center:$\neg$] (g1-1) [below=of i1] {};
    \node[gate, label=center:$\land$] (g1-2) [below=of i2] {};
    \node[gate, label=center:$\xor$] (g1-3) [below=of i3] {};
    \node[gate, label=center:$\lor$] (g2-2) [below=of g1-2] {};
    \node[gate, label=center:$\land$] (g2-3) [below=of g1-3] {};
    \node[gate, label=center:$\land$] (g3-1) [below=of $(g1-1 |- g2-2)!0.5!(g2-2)$,anchor=center] {};
    \node[gate, label=center:$\land$] (g3-2) [below=of $(g2-2)!0.5!(g2-3)$,anchor=center] {};
    \node[gate, label=center:$\xor$] (o1) [below=of $(g3-1)!0.5!(g3-2)$,anchor=center] {};
    
    \draw[<-]
        (o1) edge (g3-1) edge (g3-2)
        (g3-1) edge (g1-1) edge (g2-2)
        (g3-2) edge (g2-2) edge (g2-3)
        (g2-2) edge (g1-2) edge (g1-3)
        (g2-3) edge (g1-2) edge (g1-3)
        (g1-1) edge (i1)
        (g1-2) edge (i1) edge (i2)
        (g1-3) edge (i2) edge (i3)
    ;

\end{tikzpicture}%

    \end{adjustbox}%
    \caption{An example Boolean circuit with three inputs ($i_1$,~$i_2$,~$i_3$) and eight gates}
    \label{fig:boolean-circuit-example}
\end{figure}
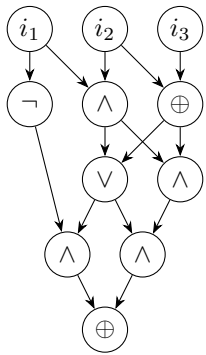

The set of vertices $V$ of a circuit $S_f$ can be naturally partitioned into subsets called \enquote{layers}, which are defined inductively as follows.

\begin{definition}[Circuit layers]
Let $V_0 = V^{in}$ denote the zeroth circuit layer.
The $k$-th ($k \geq 1$) circuit layer $V_k$ is defined inductively as the set of all vertices~$v$ satisfying the following two properties:
\begin{enumerate}[left=\parindent, topsep=4pt, itemsep=4pt]
    \item $v \notin \bigunionnolim{j=0}^{k-1} V_j$;
    \item $P_v \subseteq \bigunionnolim{j=0}^{k-1} V_j$.
\end{enumerate}
\end{definition}

\begin{definition}[Associated functions]
With each gate $v \in V \setminus V^{in}$ let us associate a predefined Boolean function $g_v \colon \{0,\! 1\}^{|P_v|} \rightarrow\nobreak \{0,\! 1\}$.
The value of $g_v$ is uniquely determined by the values of the functions $g_w$ ($w \in P_v$) with respect to the semantics of the logical connective which is associated with gate $v$.
\end{definition}

Let us fix some order on set $V^{in}$ and the same order will apply to the bits of an arbitrary word from $\{0,1\}^n$.
Thus, each bit of an arbitrary word $\alpha \in \{0,1\}^n$ is uniquely connected with some vertex from $V^{in}$.
Let us say that $\alpha$ is an \emph{input word} of $S_f$.

\begin{definition}[Circuit interpretation]
Let $\alpha \in \{0, 1\}^n$ be an arbitrary input word of the circuit~$S_f$.
Begin traversing the circuit starting from the first layer~$V_1$.
For any $v \in V_1$ we suppose that the value of an arbitrary $g_w$ ($w \in P_v$, $P_v \subseteq V^{in}$) is equal to the corresponding bit of $\alpha$ associated with $w$.
For an arbitrary gate $v \in V_j$, $j > 1$, let us calculate the value of $g_v$ on $\alpha$ using known values of $g_w$ on this input word for all $w \in P_v$.
We will also say that this value of $g_v$ is \emph{induced} by~$\alpha$.
Continue the evaluation until the values of functions $g_v$ are calculated for all gates of circuit~$S_f$.
Let us call the described process the \emph{interpretation} of the circuit $S_f$ on input word~$\alpha$.
\end{definition}

Let $S_f$ be a Boolean circuit with $n$ inputs and $m$ outputs.
Note that the interpretation of $S_f$ specifies a total function $f \colon \{0,1\}^n \rightarrow \{0,1\}^m$.
The value of this function on an arbitrary word $\alpha \in \{0,1\}^n$ is a Boolean vector $\gamma = (\gamma_1, \dotsc, \gamma_m)$, where $\gamma_k$, $k \in \{1, \dotsc, m\}$, are the values of functions $g_v$ induced by $\alpha$ for all $v \in V^{out}$.

\begin{definition}[Associated variables]
Let us associate with each vertex of circuit $S_f$ a particular Boolean variable and denote the set of all such variables as $X$. 
Let $X^{in}$ be the set of Boolean variables \emph{associated} with the inputs of $S_f$; we will
refer to these variables as to \emph{input variables}. 
The variables assigned to gates will be called \emph{auxiliary variables}.
For an arbitrary $\widetilde{V} \subseteq V$, let $\var(\widetilde{V})$ denote the set of Boolean variables assigned to nodes from~$\widetilde{V}$.
To simplify the notation, we write $\var(v) = x$ for a singleton vertex instead of $\var(\Set{v}) = \Set{x}$.
\end{definition}

Let $v$ be an arbitrary gate in~$S_f$, and let $U_v = \var(P_v)$, $u = \var(v)$.
Let $g_v$ be a Boolean function corresponding to the gate~$v$, and let $F(g_v)$ be an arbitrary Boolean formula over~$U_v$ (for example, a canonical CNF), which defines~$g_v$.
For a gate~$v$, we denote by~$C_v$ the CNF representation of  formula $F(g_v) \equiv u$.

\begin{definition}[Template CNF formula]
Let $S_f$ be some Boolean circuit which specifies the function $f \colon \{0,1\}^n \rightarrow \{0,1\}^m$.
We will refer to the CNF formula $C_f = \biglandnolim{v \in V \setminus V^{in}} C_v$ as to the \emph{template CNF formula} for function $f$ \cite{semenov2020}.
\end{definition}

\noindent Note that $C_f$ is in fact the CNF formula which can be obtained from~$S_f$ by applying Tseytin transformations~\cite{tseytin1970}.

Recall the following notation:
\(
    x^\sigma = \bigl\{\begin{smallmatrix*}[l]
        \hfill      x, &\textnormal{if } \sigma = 1 \\
        \hfill \neg x, &\textnormal{if } \sigma = 0
    \end{smallmatrix*}
\).
Let $\Phi$ be an arbitrary Boolean formula over the variables~$X$.
Denote by~$\Phi|_{x = \sigma}$ the formula obtained by substituting $x$ with~$\sigma$ in~$\Phi$~\cite{chang1973}.
It is clear that the formulas $x^\sigma \land \Phi$ and $\Phi|_{x = \sigma}$ are equisatisfiable.
Thus, when working with the formula $x^\sigma \land \Phi$, we can regard the unit clause~$x^\sigma$ as the value~$\sigma$ of the variable~$x$ in the formula~$\Phi$.

\begin{definition}[Cone]
Let $v$ be an arbitrary gate in~$S_f$, and $Q_v$ be the set of all predecessors of~$v$.
The set $Q_v \union \Set{v}$ is called the \emph{cone} of~$v$.
\end{definition}

The following fact has been repeatedly established in the literature, see \textit{e.g.}~\cite{bessiere2009,drechsler2009}.
It uses a simple Boolean constraint propagation mechanism known as the Unit Propagation rule (UP)~\cite{dowling1984,marques-silva2009}.

\begin{lemma}\label{lem:first}
Let~$C_f$ be the template CNF formula for a circuit~$S_f$.
Let~$v$ be an arbitrary gate of $S_f$, the set $Q_v$ be the cone of~$v$, $A_v$, $A_v \subseteq Q_v$ be the ancestor set of~$v$, and $X_v = \Set{x_{v,1},\dotsc,x_{v,r}} = \var(A_v)$ be the set of variables associated with $A_v$ ($X_v \subseteq X^{in}$).
Then, for each $\Tuple{\alpha_1,\dotsc,\alpha_r} \in \{0, 1\}^{|X_v|}$, application of the UP rule to the CNF formula $x_{v,1}^{\alpha_1} \land\ldots\land x_{v,r}^{\alpha_r} \land C_f$ derives (in the form of unit clauses) the values of all variables from $var(Q_v) \setminus X_v$.
Moreover, for the variable $u = var(v)$, the derived value is equal to the value of function $g_v$ induced by any input word $\alpha \in \{0, 1\}^n$ of $S_f$ which contains
(\textit{w.r.t.} corresponding variables) the sub-vector $\Tuple{\alpha_1,\dotsc,\alpha_r}$.
Note that the resulting set of unit clauses does not contain conflicting literals.
\end{lemma}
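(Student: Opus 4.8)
The plan is to argue by induction on the circuit layers~$V_k$ restricted to the cone~$Q_v$, after isolating two ingredients. The first is a purely structural observation: since every vertex $w \in Q_v$ is a predecessor of~$v$ (or $v$ itself), every parent of~$w$ is again a predecessor of~$v$, so $Q_v$ is closed under taking parents. Consequently the only vertices of~$Q_v$ lying in~$V^{in}$ are exactly the ancestors~$A_v$, whose variables~$X_v$ are precisely the ones pinned by the unit clauses $x_{v,1}^{\alpha_1} \land \dots \land x_{v,r}^{\alpha_r}$. The second ingredient is a per-gate propagation property: for a single gate~$w$ with parent variables $U_w = \var(P_w)$ and output variable~$\var(w)$, if all variables of~$U_w$ are already fixed by unit clauses, then UP applied to the gate clauses~$C_w$ (which encode $F(g_w) \equiv \var(w)$) derives~$\var(w)$, and the derived value equals $g_w$ evaluated at those parent values.

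I expect this second ingredient to be the real crux, because it is exactly where the choice of the encoding~$C_w$ matters: mere logical equivalence of~$C_w$ to $F(g_w) \equiv \var(w)$ does not by itself guarantee UP-completeness. I would therefore verify it for the canonical CNF (the encoding the statement explicitly allows): given a complete assignment to~$U_w$, the single canonical clause forbidding the wrong value of~$\var(w)$ has all of its $U_w$-literals falsified and collapses to the unit clause forcing $\var(w) = g_w(\cdot)$, which UP then propagates. For the standard Tseytin gate encodings one checks the same case by case.

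With these two facts in hand the induction is routine. Ordering the gates of~$Q_v$ by the layers~$V_k$, the base case consists of the gates $w \in Q_v \cap V_1$, whose parents all lie in $V_0 = V^{in}$; by closure those parents are ancestors, so $U_w \subseteq X_v$ is already fixed by the added unit clauses, and the per-gate property derives~$\var(w)$. For the inductive step, take $w \in Q_v \cap V_k$; its parents lie in strictly lower layers and, by closure, inside~$Q_v$, so each parent variable is either in~$X_v$ (fixed by a unit clause) or has already been derived by UP at an earlier layer. In either case $U_w$ is fully fixed, the per-gate property fires, and $\var(w)$ is derived. Iterating over all layers meeting~$Q_v$ yields the values of every variable in $\var(Q_v) \setminus X_v$; in particular, at the top gate we obtain $u = \var(v)$.

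Finally I would identify every derived value with the circuit interpretation. Carried through the induction as a simultaneous claim, the value UP assigns to~$\var(w)$ equals the value of~$g_w$ induced by the fixed ancestor bits, since $g_w$ depends only on the gates in its own cone, hence ultimately only on the ancestors of~$w$, all of which belong to~$A_v$ and are pinned by $\Tuple{\alpha_1,\dots,\alpha_r}$. This makes the derived value of~$u$ independent of which input word~$\alpha$ extends the sub-vector, which is the \enquote{moreover} clause. For the absence of conflicting literals, the cleanest argument is that the interpretation of~$S_f$ on any such~$\alpha$ furnishes a total model of $x_{v,1}^{\alpha_1} \land \dots \land x_{v,r}^{\alpha_r} \land C_f$; since UP is sound, deriving only entailed literals, a satisfiable formula can never yield both~$x$ and~$\neg x$, so the produced unit clauses are conflict-free.
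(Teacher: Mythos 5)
Your proposal is correct and follows exactly the route the paper indicates in its one-sentence proof sketch: an induction that traverses $S_f$ by layers (restricted to the cone) combined with the unit-propagation property of the per-gate Tseytin/canonical clauses. You simply spell out the details the paper leaves implicit, including the parent-closure of the cone and the soundness argument for conflict-freeness.
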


\begin{proof}
The proof of this lemma uses the traversal of~$S_f$ by layers and the properties of Tseytin transformations.
\end{proof}

\begin{corollary}[of~\cref{lem:first}]
\label{cor:first}
Application of UP to the CNF formula
$x_1^{\alpha_1} \land\ldots\land x_n^{\alpha_n} \land C_f$ for any $\Tuple{\alpha_1,\dotsc,\alpha_n} \in \{0, 1\}^{|X^{in}|}$ derives (in the form of unit clauses) the values of all variables associated with gates from~$V \setminus V^{in}$, including the variables from $var(V^{out}) = \{y_1, \dotsc, y_m\}$:
$y_1=\gamma_1, \dotsc, y_m=\gamma_m$,
$f(\alpha) = \gamma$,
$\alpha = \Tuple{\alpha_1, \dotsc, \alpha_n}$,
$\gamma = \Tuple{\gamma_1, \dotsc, \gamma_m}$.
\end{corollary}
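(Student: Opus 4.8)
The plan is to recognize this corollary as the special case of \cref{lem:first} in which the assigned variables are the \emph{entire} input set $X^{in} = \Set{x_1, \dots, x_n}$ rather than the ancestor set of a single gate. The crucial observation is that for every gate $v \in V \setminus V^{in}$, its ancestor set satisfies $A_v \subseteq V^{in}$, so the associated variables $X_v = \var(A_v)$ form a subset of $X^{in}$. Consequently, fixing the full input word $\alpha = \Tuple{\alpha_1, \dots, \alpha_n}$ simultaneously fixes the ancestor variables of every gate, meaning the hypothesis of \cref{lem:first} holds for all gates at once.

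Given this, I would first fix an arbitrary gate $v$ and let $X_v = \Set{x_{v,1}, \dots, x_{v,r}} = \var(A_v) \subseteq X^{in}$. The unit clauses $x_{v,1}^{\alpha_1} \land \dots \land x_{v,r}^{\alpha_r}$ are then among the unit clauses $x_1^{\alpha_1} \land \dots \land x_n^{\alpha_n}$ of the hypothesis. By \cref{lem:first}, UP applied to $x_{v,1}^{\alpha_1} \land \dots \land x_{v,r}^{\alpha_r} \land C_f$ already derives a value for $\var(v)$. I would then invoke the monotonicity of unit propagation: enlarging the set of input unit clauses can only enlarge (never shrink or alter) the set of literals UP derives, since every derivation step remains valid and no clause is removed. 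Hence UP on the full formula $x_1^{\alpha_1} \land \dots \land x_n^{\alpha_n} \land C_f$ also derives a value for $\var(v)$. Ranging $v$ over all of $V \setminus V^{in}$ yields derived values for every gate variable, including those in $\var(V^{out}) = \Set{y_1, \dots, y_m}$.

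It remains to identify the derived output values with $f(\alpha)$. For each output gate $v \in V^{out}$ with $\var(v) = y_k$, \cref{lem:first} states that the value UP derives for $y_k$ equals the value of $g_v$ induced by any input word containing the sub-vector $\Tuple{\alpha_1, \dots, \alpha_r}$; taking this word to be $\alpha$ itself, the derived value is precisely the value of $g_v$ induced by $\alpha$. By the definition of circuit interpretation and of the function $f$, this is exactly the $k$-th component $\gamma_k$ of $f(\alpha)$. Therefore $y_k = \gamma_k$ for all $k \in \Set{1, \dots, m}$, i.e., $\gamma = f(\alpha)$. The absence of conflicting literals likewise transfers directly from the corresponding claim in \cref{lem:first}.

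The only step that requires genuine (rather than cosmetic) justification is the monotonicity argument that lets me combine the per-gate conclusions of \cref{lem:first} into a single simultaneous derivation; everything else is bookkeeping. An alternative, fully self-contained route — avoiding this combination step — would be a direct induction on the circuit layers $V_0, V_1, V_2, \dots$: the base case $V_0 = V^{in}$ is handled by the unit clauses $x_i^{\alpha_i}$, and in the inductive step one shows that once all variables of layers $V_0, \dots, V_{k-1}$ have UP-derived values, the clauses $C_v$ encoding $F(g_v) \equiv u$ force a unique value for $\var(v)$ for each $v \in V_k$. I expect the layer-induction to be the more robust phrasing if a referee objects to treating monotonicity of UP as self-evident.
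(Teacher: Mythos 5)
Your proposal is correct and matches the paper's intent: the paper states this corollary as an immediate consequence of Lemma~1 (whose own proof is the layer-by-layer traversal you describe as the alternative route), and your argument simply makes explicit the specialization to the full input set together with the monotonicity of unit propagation. Nothing essential differs.
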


Note that from \cref{lem:first} and \cref{cor:first} it follows that the process of interpretation of circuit $S_f$ on an arbitrary input word $(\alpha_1, \dotsc, \alpha_n)$ is modelled by consecutive application of the UP rule to the CNF formula $x_1^{\alpha_1} \land \ldots \land x_n^{\alpha_n} \land C_f$.

\subsection{SAT partitioning}

As mentioned above, SAT is NP-hard, so some instances of SAT can be very difficult for conventional solvers.
There are several approaches to parallelizing SAT solving~\cite{Hyvarinen2011}, the main ones being the portfolio approach (\textit{e.g.}~\cite{hordesat}) and the partitioning approach~(\textit{e.g.}~Cube and Conquer~\cite{CC2012}).
In this paper, we follow the partitioning approach.

Let us consider an arbitrary CNF formula~$C$ over the set of Boolean variables~$X$ and set $\Pi = \{G_1, \dotsc, G_s\}$, where $G_i$ ($i \in\nobreak \{1, \dotsc, s\}$) are some Boolean formulas.
Let~us say that the set $\Pi$ yields a SAT partitioning of~$C$ if the following conditions hold:
\begin{itemize}
    \item formulas $C$ and $C \land (G_1 \vee \dots \vee G_s)$ are equisatisfiable;
    \item for each $i, j \in \{1, \dotsc, s\}$, $i \neq j$, formula $C \land G_i \land G_j$ is unsatisfiable.
\end{itemize}

For some set of variables $B \subseteq X$, $B = \{x_{k_1}, \ldots, x_{k_r}\}$, each formula $x_{k_1}^{\alpha_1}, \ldots x_{k_r}^{\alpha_r}$ for an arbitrary $(\alpha_1, \ldots, \alpha_r) \in \{0, 1\}^r$ is called a \emph{cube} (over~$X$).
For an arbitrary CNF formula $C$ over the set of variables~$X$, a simple
example of a partitioning is generated by set $\Pi = \{G_1,\ldots, G_{2^r}\}$, which consists of all possible cubes over an arbitrary set $B$, $|B| = r$.

In the following, we will use the term \emph{partitioning} for both the set~$\Pi$ and for the set of CNF formulas generated by~$\Pi$.

\subsection{Background from probability theory}

Below we will use some probabilistic reasoning to estimate the hardness of SAT encodings for LEC
instances.
Let us recall some relevant basic facts from probability theory.

Let $\xi$ be some random variable with finite spectrum (i.e. set of its values) $\{\xi_1, \ldots, \xi_M\}$ and probability distribution $P_\xi = \{p_1, \ldots, p_M\}$.
In the following, use assume that $0 < \xi_i < \infty$ for every $i \in \{1, \ldots, M\}$.
Then, the expected value (expectation) of $\xi$ is defined as $E[\xi] = \bigsumnolim{i = 1}^{M} \xi_i p_i$.
In~many practical applications, knowledge of $E[\xi]$ turns out to be very important.
However, it is often impossible to accurately calculate the exact value of $E[\xi]$ in a reasonable amount of time.
In such cases, one can instead estimate $E[\xi]$ with some predetermined accuracy $\varepsilon$.
The corresponding algorithms use random sampling and traditionally refer to the Monte Carlo method~\cite{MetropUlam49}.

More precisely, let $\xi_1, \ldots, \xi_N$ be independent observations of the random variable $\xi$.
Then, Chebyshev’s inequality~\cite{Feller71} implies:
\begin{equation}
    \label{eq:cheb}
    \Pr \left\{
        (1 \!-\! \varepsilon) \! \,\! E[\xi]
        \leq
        \frac{1}{N} \!\!\bigsum{j = 1}^N \! \xi_j
        \leq
        (1 \!+\! \varepsilon) \! \,\! E[\xi]
    \right\} \!\geq\!
    1 - \frac{Var(\xi)}{\varepsilon^2 \! \,\! N \! \,\! E^2[\xi]},
\end{equation}
where $Var(\xi)$ denotes the variance of the random variable~$\xi$.
It follows from~\eqref{eq:cheb} that for finite $E[\xi]$ and~$Var(\xi)$, the expectation $E[\xi]$ can be approximated (in the sense of~\eqref{eq:cheb}) by the value $\hat{\xi}=\frac{1}{N} \bigsumnolim{j = 1}^N \xi_j$ with any tolerance~$\varepsilon$ given in advance by increasing the number of observations~$N$.

The following inequality is less accurate than~\eqref{eq:cheb} in the general case, but it is often more convenient to use for $E[\xi]$ estimation:
\begin{equation}
    \label{eq:cheb-2}
    \Pr \left\{
        \left| E[\xi] - \frac{1}{N} \bigsum{j = 1}^N \xi_j \right| \leq \varepsilon \right
    \} \geq
    1 - \frac{Var(\xi)}{\varepsilon^2  \, N}.
\end{equation}
If~$\xi$ is a Bernoulli variable, \ie takes values in~$\{0,1\}$, then instead of~\eqref{eq:cheb-2} one can use the following variant of Chernoff bound (see \textit{e.g.}~\cite{MotwRagh95}):
\begin{equation}
    \Pr \left\{
        \left| E[\xi] - \frac{1}{N} \bigsum{j = 1}^N \xi_j \right| \leq \varepsilon
    \right\} \geq
    1 - 2  \, e^{ - \frac{\varepsilon^2  \, N}{4}}.
\label{eq:cheb-3}
\end{equation}
The proof of~\eqref{eq:cheb-3} can be found in~\cite{KarpLuby89}.

\section{Estimating the hardness of SAT encodings of LEC instances using SAT partitioning}

Let us return to LEC.
Consider two Boolean circuits $S_f$, $S_h$ defining functions $f, h \colon \{0,1\}^n \rightarrow \{0,1\}^m$.
Let us construct a circuit which will be denoted by $S_{f \glue h}$.
This circuit is obtained from $S_f$ and $S_h$ by \enquote{gluing} together the input vertices (see~\cref{fig:glued}).
Thus, this circuit has the same $V^{in}$ as $S_f$ and $S_h$, and defines the function $f \glue h: \{0,1\}^n \rightarrow \{0,1\}^{2m}$ .

\begin{figure}
    \centering
    \begin{tikzpicture}[
    auto,
    on grid,
    >={Stealth[]},
    dot/.style={
        draw,
        fill,
        circle,
        minimum size=4pt,
        inner sep=0pt,
    },
    smalldot/.style={
        draw,
        fill,
        circle,
        minimum size=2pt,
        inner sep=0pt,
    },
]
    \def\HPosF{2}
    \def\VPosF{1.6}
    \def\VGapIO{0.3}

    \node[dot]      (X1) at (  -1, 0) {};
    \node[smalldot] (X2) at (-0.5, 0) {};
    \node[smalldot] (X3) at (   0, 0) {};
    \node[smalldot] (X4) at ( 0.5, 0) {};
    \node[dot]      (X5) at (   1, 0) {};
    
    \draw [decorate, decoration={brace,raise=6pt,amplitude=6pt}] (-1,0) -- (1,0) node[pos=0.5,above=10pt] {$n$ inputs};
    
    \node[draw] (F1) at (-\HPosF,-\VPosF) {$S_f : \{0,1\}^n \to \{0,1\}^m$};
    \node[draw] (F2) at ( \HPosF,-\VPosF) {$S_h : \{0,1\}^n \to \{0,1\}^m$};
    
    \foreach \k in {1,2} {
        \coordinate (F\k i1) at ([xshift=.5\pgflinewidth]F\k.north west);
        \coordinate (F\k i5) at ([xshift=-.5\pgflinewidth]F\k.north east);
        \coordinate (F\k i3) at (F\k.north);
        \coordinate (F\k i2) at ($(F\k i1)!0.5!(F\k i3)$);
        \coordinate (F\k i4) at ($(F\k i3)!0.5!(F\k i5)$);
        
        \coordinate (F\k o1) at ([xshift=.5\pgflinewidth]F\k.south west);
        \coordinate (F\k o5) at ([xshift=-.5\pgflinewidth]F\k.south east);
        \coordinate (F\k o3) at (F\k.south);
        \coordinate (F\k o2) at ($(F\k o1)!0.5!(F\k o3)$);
        \coordinate (F\k o4) at ($(F\k o3)!0.5!(F\k o5)$);
        
        \foreach \i in {1,...,5} {
            \node[dot,fill=white] [above=\VGapIO of F\k i\i] (F\k I\i) {};
            \node[dot,fill=white] [below=\VGapIO of F\k o\i] (F\k O\i) {};
        }
    }
    
    \foreach \k in {1,2} {
        \foreach \i in {1,...,5} {
            \foreach \t/\T in {i/I,o/O} {
                \draw (F\k\t\i) -- (F\k\T\i);
            }
        }
    }
    
    \draw (X1) -- (F1I1);
    \draw (X1) -- (F2I1);
    \draw (X5) -- (F1I5);
    \draw (X5) -- (F2I5);
    \foreach \k in {1,2} {
        \foreach \i in {2,...,4} {
            \draw[dashed,lightgray] (X\i) -- (F\k I\i);
        }
    }
    
    \draw [decorate, decoration={brace,mirror,raise=6pt,amplitude=6pt}] (F1O1) -- (F2O5) node[pos=0.5,below=10pt] {$2m$ outputs};

\end{tikzpicture}%
    \caption{Circuit $S_{f \glue h}$ constructed by using the same set of inputs for two circuits $S_f$ and $S_h$}
    \label{fig:glued}
\end{figure}
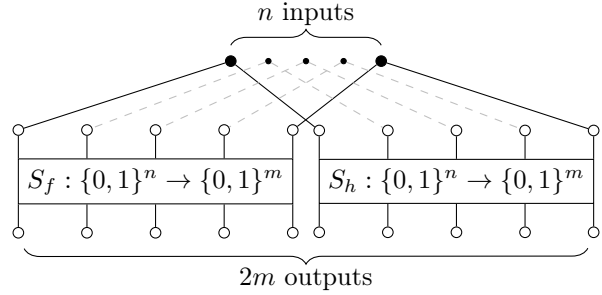

Denote $V_f^{out}$ and $V_h^{out}$ the output sets of circuits $S_f, S_h$, and denote $Y_f = \Set{y_1^f, \dotsc, y_m^f}$ and $Y_h = \Set{y_1^h, \dotsc, y_m^h}$ the sets of variables assigned to vertices from $V_f^{out}$ and $V_h^{out}$ and ordered according to the semantics of the circuits.
Consider the formula:
\begin{equation}
\label{eq:miter-func}
\left(y_1^f \xor y_1^h\right) \lor \cdots \lor \left(y_m^f \xor y_m^h\right).
\end{equation}
Formula~\eqref{eq:miter-func} defines a Boolean function $\mu \colon \{0, 1\}^{2m} \to \{0, 1\}$ called a \emph{miter}.
Let us apply Tseytin transfomation (in a standard manner) to formula \eqref{eq:miter-func} and denote the resulting CNF formula as $C(\mu)$.
It follows directly from~\cref{lem:first} that circuits $S_f$ and $S_h$ are equivalent if and only if the following CNF formula is unsatisfiable:
\begin{equation}
\label{eq:miter-cnf}
    C_{f \glue h} \land C(\mu),
\end{equation}
where $C_{f \glue h}$ is the template CNF formula for function~$f \glue h$.
Consider below another corollary of~\cref{lem:first}.

\begin{corollary}
For two arbitrary functions $f, h \colon \{0,1\}^n \rightarrow \{0,1\}^m$ specified by circuits $S_f, S_h$, the number of assignments satisfying template CNF formulas $C_f$, $C_h$, and $C_{f \glue h}$, is~$2^n$.
\label{cor:second}
\end{corollary}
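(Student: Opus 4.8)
The plan is to prove, for each of the three template CNF formulas, that its satisfying assignments are in bijection with the set of input words $\{0,1\}^n$; the count $2^n$ then follows immediately. The key tool is \cref{cor:first}, which says that fixing the input variables forces, via UP and without conflict, a unique value for every gate variable.

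First I would handle $C_f$. Let $X^{in} = \{x_1,\dots,x_n\}$ be its input variables and let $\rho$ be the map that sends a satisfying assignment $\beta$ of $C_f$ to its restriction $\beta|_{X^{in}} \in \{0,1\}^n$. I claim $\rho$ is a bijection. For surjectivity, fix an arbitrary input word $\alpha = \Tuple{\alpha_1,\dots,\alpha_n}$. By \cref{cor:first}, applying UP to $x_1^{\alpha_1} \land \dots \land x_n^{\alpha_n} \land C_f$ derives, without conflict, a value for each gate variable, and these derived values are exactly the values of the functions $g_v$ induced by $\alpha$. Since every clause group $C_v$ encodes the equivalence $F(g_v) \equiv \var(v)$, the resulting total assignment satisfies each $C_v$ and hence $C_f$; its restriction to $X^{in}$ is $\alpha$, so $\alpha \in \operatorname{im}\rho$. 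For injectivity, suppose $\beta,\beta'$ are satisfying assignments agreeing on $X^{in}$. An induction over the circuit layers $V_1, V_2, \dots$ shows that in any satisfying assignment the value of a gate variable $\var(v)$ is determined by the values on $\var(P_v)$ (because the clauses $C_v$ force $\var(v) \equiv g_v$), hence ultimately by the values on $X^{in}$; therefore $\beta = \beta'$.

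Consequently $\rho$ is a bijection and $C_f$ has exactly $|\{0,1\}^n| = 2^n$ satisfying assignments. The argument is identical for $C_h$, which is again a template CNF formula over the same $n$ input variables. For $C_{f \glue h}$, note that the glued circuit $S_{f \glue h}$ has the single common input set $V^{in}$ of size $n$, and $C_{f \glue h}$ is its template CNF formula; every non-input vertex (all gates inherited from both $S_f$ and $S_h$) carries a gate variable functionally determined by these shared inputs, so the same restriction map to $X^{in}$ is a bijection and the count is once more $2^n$. It is precisely the sharing of inputs that keeps this number at $2^n$ rather than $2^{2n}$.

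The step I expect to be the main obstacle is the injectivity/uniqueness claim, since it requires making precise --- by the layer induction and the logical equivalence $C_v \equiv (\var(v) \equiv g_v)$ --- that \emph{every} satisfying assignment, not merely the one produced by UP, has its gate variables forced by the inputs. The surjectivity direction, by contrast, is an immediate application of \cref{cor:first}.
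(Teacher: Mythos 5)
Your proof is correct and follows exactly the route the paper intends: the paper states this result as an immediate consequence of \cref{lem:first} (giving no explicit proof), and your bijection between satisfying assignments and input words --- surjectivity via the conflict-free UP derivation of \cref{cor:first}, injectivity via layer induction using the equivalences $C_v \equiv (F(g_v) \equiv \var(v))$ --- is precisely the argument being invoked. Your observation that the shared input set of $S_{f \glue h}$ is what keeps the count at $2^n$ rather than $2^{2n}$ is a worthwhile clarification the paper leaves implicit.
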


A rather interesting observation is the following.
Modern CDCL-based SAT solvers, receiving a CNF formula of the form $C_f$ or $C_{f \glue h}$ as input, very quickly (usually within a fraction of a second) generate some satisfying assignment corresponding to some input/output pair.
At the same time, CNF formulas~\eqref{eq:miter-cnf} can be very hard.
It is worth to note that formulas $C_{f \glue h}$ and~\eqref{eq:miter-cnf} differ from each other only by clauses corresponding to the miter (their fraction in~\eqref{eq:miter-cnf} usually is extremely small).

Everywhere below, we assume that $\Oracle$ is an arbitrary complete SAT solver.
If formula~\eqref{eq:miter-cnf} is difficult for $\Oracle$, then often we cannot even say how much time $\Oracle$ will take to solve this SAT instance.
The difficulty of this kind of assessment is due to an effect that is known as the heavy-tailed behavior of CDCL-based SAT solvers~\cite{GomesSabh09}.
However, in some cases, we can estimate the overall hardness of a SAT instance quite efficiently and accurately by estimating the hardness of its SAT partitioning. 
Let us note that the following definition is inspired by the decomposition hardness notion \cite{CP2021}.

\begin{definition}[Hardness w.r.t. SAT partitioning]
Let $C$ be an arbitrary CNF formula, $\Pi$ be some partitioning of $C$, and $\Oracle$ be some complete SAT solver.
The total runtime of solver $\Oracle$ on instances $G \land C$ for all formulas $G \in \Pi$ is called the \emph{hardness} of $C$ w.r.t solver $\Oracle$ and partitioning $\Pi$, and is denoted as $T_\Oracle(C, \Pi)$.
\end{definition}

Below we show that $T_\Oracle(C, \Pi)$ can be estimated using a probabilistic Monte Carlo algorithm.
Let us describe the general scheme for constructing such estimates.

Let $C$ is an arbitrary SAT instance and $\Pi = \{G_1, \ldots, G_s\}$ be some partitioning of $C$.
If $s$ is large, then it is possible to estimate the time needed to solve~$C$ \textit{w.r.t.}~$\Pi$ through an estimate of the solution time of~$N$ SAT instances $G_k \land C$, $k \in \{1, \ldots, N\}$ chosen from~$\Pi$ according to some distribution.
As a rule, at the initial stage we fix a uniform distribution on~$\Pi$.
Let us introduce a random variable~$\xi_\Pi$ whose values are equal to the running time of the SAT solver~$\Oracle$ on formulas $G_j \land C$, $j \in \{1, \dotsc, s\}$.
Let~$Spec(\xi_\Pi) = \{\xi_1, \dotsc, \xi_Q\}$ be the spectrum of $\xi_\Pi$, and each value $\xi_r$, $r \in \{1, \dotsc, Q\}$, is assigned a probability $p_r = \frac{\#\xi_r}{s}$,
where $\#\xi_r$ denotes the number of such $G_j$, $j \in \{1, \dotsc, s\}$, that the running time of $\Oracle$ on the formula $G_j \land C$ is~$\xi_r$.
Thus,~$\xi_\Pi$ has the distribution law $P(\xi_\Pi) = \{p_1, \ldots, p_Q\}$.
Recall again that $\Oracle$ is complete SAT solver, so $\xi_\Pi$ has finite spectrum, expected value, and variance.
The following fact is true.

\begin{theorem}
The hardness of SAT instance $C$ w.r.t. solver $\Oracle$ and SAT partitioning $\Pi$ is $T_\Oracle(C, \Pi) = s  \, E[\xi_\Pi]$.
\end{theorem}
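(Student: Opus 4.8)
The plan is to prove the identity directly, by unfolding the definitions of both quantities and observing that they are two ways of writing the same finite sum. I would begin with the left-hand side. By the definition of hardness \textit{w.r.t.} a partitioning, $T_\Oracle(C,\Pi)$ is the sum of the runtimes of $\Oracle$ over all $s$ instances $G_j \land C$, $j \in \{1,\dots,s\}$. Denote by $t_j$ the runtime of $\Oracle$ on $G_j \land C$; since $\Oracle$ is a complete SAT solver, it halts on every such instance, so each $t_j$ is a well-defined finite number, and $T_\Oracle(C,\Pi) = \sum_{j=1}^s t_j$.

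Next I would turn to the right-hand side and expand $E[\xi_\Pi]$ using the distribution law specified for $\xi_\Pi$. By construction the spectrum $Spec(\xi_\Pi) = \{\xi_1,\dots,\xi_Q\}$ is exactly the set of distinct values occurring among $t_1,\dots,t_s$, and $p_r = \#\xi_r / s$, where $\#\xi_r$ is the number of indices $j$ with $t_j = \xi_r$. Hence $E[\xi_\Pi] = \sum_{r=1}^Q \xi_r\, p_r = \frac{1}{s}\sum_{r=1}^Q \xi_r\,\#\xi_r$.

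The crux---though it is elementary---is to recognize that $\sum_{r=1}^Q \xi_r\,\#\xi_r$ is merely a regrouping of $\sum_{j=1}^s t_j$: one partitions the index set $\{1,\dots,s\}$ into the fibers $\{j : t_j = \xi_r\}$, each of which contributes the value $\xi_r$ exactly $\#\xi_r$ times. Since $\Oracle$ is deterministic, every instance $G_j \land C$ carries a single runtime value, so these fibers are pairwise disjoint and cover $\{1,\dots,s\}$; consequently $\sum_{r=1}^Q \xi_r\,\#\xi_r = \sum_{j=1}^s t_j = T_\Oracle(C,\Pi)$. Combining this with the expression for $E[\xi_\Pi]$ gives $E[\xi_\Pi] = \frac{1}{s}\, T_\Oracle(C,\Pi)$, i.e. $T_\Oracle(C,\Pi) = s\,E[\xi_\Pi]$, as claimed.

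I expect no genuine obstacle here: the statement is the standard fact that the expectation of a random variable under the uniform distribution on a finite sample space equals the arithmetic mean of its values, rescaled by the sample size $s$. The only points worth stating explicitly are that the completeness of $\Oracle$ guarantees the finiteness of the spectrum (and hence the existence of $E[\xi_\Pi]$ and $Var(\xi_\Pi)$, which the preceding discussion already invokes), and that the determinism of $\Oracle$ ensures each cell $G_j \land C$ has a single, well-defined runtime, so no double counting arises in the regrouping step.
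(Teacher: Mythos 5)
Your proof is correct and follows essentially the same route as the paper's: both regroup the total runtime $\sum_{j=1}^s t_j$ by the distinct values $\xi_r$ with multiplicities $\#\xi_r$ and recognize the result as $s\,E[\xi_\Pi]$. Your version merely spells out the fiber decomposition and the role of completeness and determinism of $\Oracle$ more explicitly than the paper's one-line computation.
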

\begin{proof}
\[
    T_\Oracle(C, \Pi)
    = \bigsum{r=1}^Q \xi_r  \, \#\xi_r
    = s  \, \bigsum{r=1}^Q \xi_r  \, \frac{\#\xi_r}{s} = s  \, E[\xi_\Pi].
\]
\end{proof}

The running time of~$\Oracle$ can be measured in any convenient units, for example, in seconds, the number of times the Unit Propagation rule is applied, or the number of conflicts generated by~$\Oracle$.

To estimate $E[\xi_\Pi]$, one can use the Monte Carlo method and specifically the formula~\eqref{eq:cheb}.
Despite the formal possibility of achieving any estimation accuracy by increasing the number of observations~$N$ of the value~$\xi_\Pi$, in many practical cases, the obtained estimates may be inaccurate due to high variance~$Var(\xi_\Pi)$, which, in turn, is a consequence of the effect of heavy-tailed behavior of CDCL SAT solvers.
Thus, arises the problem of constructing such partitionings for which $Var(\xi_\Pi)$ would not exceed some reasonable limit:
for example, the standard deviation $\sigma = +\sqrt{Var(\xi_\Pi)}$ should not exceed $E[\xi_\Pi]$.
Below we describe two general SAT partitioning constructions for which $\sigma$ has relatively small values on the LEC instances discussed below.
The ideas underlying such constructions are based on the properties of CNF formulas $C_f$, $C_h$, $C_{f \glue h}$, and $C_{f \glue h} \land C(\mu)$.

Consider LEC for circuits $S_f, S_h$ ($f, h \colon \{0, 1\}^n \rightarrow \{0, 1\}^m$).
Let us once again focus on the fact that the CNF formula $C_{f \glue h}$ has $2^n$ satisfying assignments,
while the CNF formula $C_{f \glue h} \land C(\mu)$ has none if $S_f \cong S_h$.
Let $\Pi = \{G_1, \ldots, G_s\}$ be an arbitrary SAT partitioning of $C_{f \glue h}$.
Denote by $\#(G \land C)$ the number of satisfying assignments of the formula $G \land C$ for an arbitrary $G \in \Pi$.
It is easy to deduce the following fact from the general properties of the SAT partitioning and~\cref{lem:first}.

\begin{proposition}
\label{cor:prop2}
Let $\Pi$ be an arbitrary SAT partitioning of $C_{f \glue h}$. Then, the following equation holds:
\[
    \bigsum{G \in \Pi} \#(G \land C_{f \glue h}) = 2^n.
\]
\end{proposition}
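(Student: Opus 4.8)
The plan is to establish that the SAT partitioning of $C_{f \glue h}$ induces a partition of the set of satisfying assignments of $C_{f \glue h}$, and then invoke \cref{cor:second}, which tells us this set has cardinality exactly $2^n$. First I would recall the two defining conditions of a SAT partitioning: that $C_{f \glue h}$ and $C_{f \glue h} \land (G_1 \lor \dots \lor G_s)$ are equisatisfiable, and that $C_{f \glue h} \land G_i \land G_j$ is unsatisfiable for all $i \neq j$. The aim is to turn these two conditions into a statement about counting satisfying assignments rather than mere satisfiability.

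Next I would argue that every satisfying assignment $\alpha$ of $C_{f \glue h}$ satisfies exactly one of the disjuncts $G_k$. The \emph{existence} of at least one such $k$ follows from the equisatisfiability condition, but stated carefully: since the partitioning conditions are meant to hold not just for global satisfiability but clause-wise, any $\alpha$ satisfying $C_{f \glue h}$ must also satisfy $G_1 \lor \dots \lor G_s$, hence at least one $G_k$. The \emph{uniqueness} (at most one such $k$) follows from the pairwise condition: if $\alpha$ satisfied both $G_i$ and $G_j$ with $i \neq j$, then $\alpha$ would be a satisfying assignment of $C_{f \glue h} \land G_i \land G_j$, contradicting its unsatisfiability. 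Therefore the sets of satisfying assignments of the formulas $G_k \land C_{f \glue h}$, for $k \in \{1, \dots, s\}$, are pairwise disjoint and their union is exactly the set of satisfying assignments of $C_{f \glue h}$.

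With the disjointness and covering established, the counting identity is immediate:
\[
    \bigsum{G \in \Pi} \#(G \land C_{f \glue h})
    = \#(C_{f \glue h})
    = 2^n,
\]
where the last equality is precisely \cref{cor:second}. The appeal to \cref{lem:first} in the statement is what underpins \cref{cor:second}, so invoking the corollary directly suffices.

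The main obstacle I anticipate is making the \enquote{exactly one disjunct} step fully rigorous, since the equisatisfiability condition as literally stated only guarantees that \emph{some} assignment satisfies the conjunction with the disjunction, not that \emph{every} satisfying assignment of $C_{f \glue h}$ extends to one. I would need to either strengthen or reinterpret the first partitioning condition so that it applies to each individual satisfying assignment (which is the intended reading in the Cube-and-Conquer setting, where $G_1 \lor \dots \lor G_s$ is a tautology over the relevant variables), or observe that in the cube-based constructions used here the $G_k$ genuinely cover the whole Boolean hypercube. Handling this point carefully is the only genuinely delicate part; the remaining steps are routine set-theoretic bookkeeping.
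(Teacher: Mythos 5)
Your proposal is correct and follows exactly the route the paper intends (the paper offers no explicit proof, merely asserting the result follows from the partitioning properties and \cref{lem:first}/\cref{cor:second}): the pairwise-unsatisfiability condition gives disjointness, the covering condition gives exhaustiveness, and \cref{cor:second} supplies the count $2^n$. Your observation that the literal \emph{equisatisfiability} clause in the definition is too weak to guarantee that every satisfying assignment of $C_{f \glue h}$ is covered by some $G_k$ is a genuine and well-spotted gap in the paper's own definition, and your proposed fix (reading the condition as model-preservation, which holds for both constructions actually used) is the right one.
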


Thus, an arbitrary formula $G \land C_{f \glue h}$ has $\#(G \land C_{f \glue h})$ satisfying assignments (and this number can be significantly larger than zero),
but at the same time the formula $G \land C_{f \glue h} \land C(\mu)$ is unsatisfiable if $S_f$ and $S_h$ are equivalent.
Allowing a somewhat loose interpretation, we can say that by proving the unsatisfiability of $G \land C_{f \glue h} \land C(\mu)$,
we \emph{block} $\#(G \land C_{f \glue h})$ satisfying assignments of formula $G \land C_{f \glue h}$.
In total, when solving all problems in the SAT partitioning, we need to block all $2^n$ satisfying assignments of $G \land C_{f \glue h}$
(the sets of assignments of different formulas $G \land C_{f \glue h}$ are disjoint).

Taking into account all said above, there arises an attractive idea to link the hardness of formulas  $G \land C_{f \glue h} \land C(\mu)$
with the number of satisfying assignments of corresponding formulas $G \land C_{f \glue h}$.
Looking ahead, let us note that our computational experiments demonstrate this exact connection: the more satisfying assignments the formula $G \land C_{f \glue h}$ has,
the harder the formula $G \land C_{f \glue h} \land C(\mu)$ is.
Thus, if we want all problems $G \land C_{f \glue h} \land C(\mu)$ in the SAT partitioning $\Pi$ to have approximately equal hardness
(which would correspond to a relatively low variance $Var(\xi_\Pi)$), we must ensure that all formulas  $G \land C_{f \glue h}$
have an approximately equal number of satisfying assignments.
In the following, we describe two types of SAT partitionings that satisfy these requirements.

\begin{constr}
Denote as $\widetilde{C}$ the CNF formula $C_{f \glue h}$ or CNF formula $C_{f \glue h} \land C(\mu)$.
Consider the set of variables $X^{in} = \{x_1, \ldots, x_n\}$ assigned to the inputs of the circuit $S_{f \glue h}$.
Let us divide (generally speaking, in an arbitrary way) the set $X^{in}$ into disjoint subsets of variables with $k$, $k \geq 1$ variables in each group.
For simplicity, we will assume that $n$ is divisible by~$k$.
We have sets $X^1, \dotsc, X^{n/k}$.
With each set $X^j$, $j \in \{1, \dotsc, n/k\}$ we associate an arbitrary non-constant Boolean function $\lambda_1^j$, $\lambda_1^j \colon \{0, 1\}^{|X^j|} \rightarrow \{0, 1\}$,
and the function ${\lambda_2^j = \lnot \lambda_1^j}$.
Let~$\phi_1^j, \phi_2^j$ be arbitrary formulas defining the functions $\lambda_1^j$ and $\lambda_2^j$.
Let the formula $\phi^1 \land \cdots \land \phi^{n/k}$ be an arbitrary formula in which $\phi^j$ denotes either the occurrence of $\phi_1^j$ or the occurrence of $\phi_2^j$.
It is easy to see that the following fact holds.
\label{constr:1}
\end{constr}

\begin{proposition}
The set of all $2^{n/k}$ possible formulas $\phi^1 \land \cdots \land \phi^{n/k} \land \widetilde{C}$ forms a SAT partitioning of $\widetilde{C}$.
\end{proposition}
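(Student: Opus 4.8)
My plan is to verify directly the two conditions in the definition of a SAT partitioning for the set $\Pi$ of all $2^{n/k}$ formulas $\phi^1 \land \cdots \land \phi^{n/k}$. It is convenient to index these by a choice vector $c = (c_1, \dots, c_{n/k}) \in \{1,2\}^{n/k}$, writing $G_c = \phi_{c_1}^1 \land \cdots \land \phi_{c_{n/k}}^{n/k}$. The structural fact driving everything is that, since $\lambda_2^j = \lnot \lambda_1^j$, for each block $j$ the two preimages $L_1^j = (\lambda_1^j)^{-1}(1)$ and $L_2^j = (\lambda_2^j)^{-1}(1)$ form a partition of the local assignment space $\{0,1\}^{|X^j|}$ into two disjoint, covering cells. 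Taking the product of these cells across all blocks yields a partition of $\{0,1\}^n$ (the assignment space of $X^{in}$) into $2^{n/k}$ cells, where the cell indexed by $c$ is exactly the set of $X^{in}$-assignments satisfying $G_c$. The two partitioning conditions are then just the covering and the disjointness of this product partition.

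For the pairwise-unsatisfiability condition, I take $c \neq c'$ and pick a block $j$ on which they differ, so $\{c_j, c'_j\} = \{1,2\}$. Then $G_c \land G_{c'}$ contains both $\phi_1^j$ and $\phi_2^j$ as conjuncts. Since $\phi_1^j$ forces $\lambda_1^j = 1$ and $\phi_2^j$ forces $\lambda_1^j = 0$ on the shared variables $X^j$, no assignment can satisfy both, so $\phi_1^j \land \phi_2^j$, and hence $\widetilde{C} \land G_c \land G_{c'}$, is unsatisfiable. Note that this step uses no property of $\widetilde{C}$ and holds whether $\widetilde{C}$ is $C_{f \glue h}$ or $C_{f \glue h} \land C(\mu)$.

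For equisatisfiability of $\widetilde{C}$ and $\widetilde{C} \land (G_1 \lor \cdots \lor G_{2^{n/k}})$, the backward direction is immediate since $\widetilde{C}$ is a conjunct. For the forward direction I take any model $\alpha$ of $\widetilde{C}$ and restrict it to $X^{in}$ to obtain $\beta = (\beta^1, \dots, \beta^{n/k})$. Because $L_1^j$ and $L_2^j$ cover $\{0,1\}^{|X^j|}$, for each block there is a unique index $c_j$ with $\beta^j \in L_{c_j}^j$; the corresponding $G_c$ is satisfied by $\beta$, so $\alpha$ (possibly extended on any fresh auxiliary variables introduced by the chosen $\phi_{c_j}^j$) is a model of $\widetilde{C} \land G_c$, and therefore of $\widetilde{C} \land (G_1 \lor \cdots \lor G_{2^{n/k}})$.

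The only delicate point is the meaning of ``$\phi_i^j$ defines $\lambda_i^j$''. If the $\phi_i^j$ are plain Boolean formulas over $X^j$, the two claims above (that satisfying $\phi_1^j$ and $\phi_2^j$ simultaneously is contradictory, and that $\phi_{c_j}^j$ is satisfiable with input part $\beta^j$ whenever $\beta^j \in L_{c_j}^j$) are tautological. If instead the $\phi_i^j$ are Tseytin-style encodings that introduce auxiliary variables, I must assume these auxiliary variables are fresh, that is, pairwise disjoint across the two functions of a block, across distinct blocks, and disjoint from the variables of $\widetilde{C}$, so that a common model of $\phi_1^j, \phi_2^j$ still forces the contradiction on $X^j$ and so that a model can be extended block by block. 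I would state this freshness convention explicitly; once it is in place the argument is routine. I would also remark that the non-constancy of $\lambda_1^j$ is not needed for the partitioning property itself: it only guarantees that both cells $L_1^j, L_2^j$ are nonempty, which matters for the balance and variance considerations motivating the construction, not for its correctness.
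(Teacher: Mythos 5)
Your proof is correct and is precisely the direct verification the paper has in mind: the paper states this proposition without proof (``It is easy to see that the following fact holds''), and your block-wise product-partition argument checking the two defining conditions of a SAT partitioning is the standard intended argument. Your explicit handling of the freshness of any auxiliary variables introduced by the formulas $\phi_i^j$, and your observation that non-constancy of $\lambda_1^j$ is irrelevant to correctness, are careful additions but do not change the route.
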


Note that if all functions $\lambda_1^j, \lambda_2^j$ are \emph{balanced} (take values 0 and 1 on an equal number of sets of values of variables from $X^j$),
then for any set $\phi^1, \dotsc, \phi^{n/k}$ the formula $\phi^1 \land \cdots \land \phi^{n/k} \land C_{f \glue h}$ has $2^{(1 - 1/k)/n}$ satisfying assignments.
For example, let us suppose that $n$ is even and consider the following case: $X^1 = \{x_1, x_2\}, X^2 = \{x_3, x_4\}, \dotsc, X^{n/2} = \{x_{n-1}, x_n\}$.
We choose the functions $\lambda_1^j(u, v) = (u \xor v)$, $\lambda_2^j(u, v) = (u \equiv v)$.
Then, each formula $\phi^1 \land \cdots \land \phi^{n/2} \land C_{f \glue h}$ has $2^{n/2}$ satisfying assignments.

The second class of SAT partitionings which gives good results on LEC instances uses specially constructed cubes over subsets $B \subset X \setminus X^{in}$.

\begin{constr}
Let us start again with LEC for two circuits $S_f, S_h$ and consider CNF formulas $C_f, C_h, C_{f \glue h}, C_{f \glue h} \land C(\mu)$.
Let $X_f, X_h$ be the sets of variables occurring in formulas $C_f, C_h$ accordingly.
Consider the following sets: $B_f \subset X_f \setminus X^{in}$, $B_h \subset X_h \setminus X^{in}$, $\tilde{B} = B_f \cup B_h = \{\tilde{x}_1, \dotsc, \tilde{x}_s\}$ (we assume that $|\tilde{B}| = s$).
Obviously, the set of cubes $\Pi = \{\tilde{x}_1^{\alpha_1} \land \cdots \land \tilde{x}_s^{\alpha_s}\}_{\alpha \in \{0, 1\}^{|\tilde{B}|}}$, $\alpha = (\alpha_1, \dotsc, \alpha_s)$
yields a SAT partitioning of formulas $C_{f \glue h}$ and $C_{f \glue h} \land C(\mu)$.
\label{constr:2}
\end{constr}

Our next task is to learn how to build sets $\tilde{B}$ (following \cite{CP2021} we will refer to such a set as to a \emph{decomposition set}) that would provide acceptable hardness \textit{w.r.t.}~$\Pi$ for computationally hard LEC instances.
Our approach to constructing sets $\tilde{B}$ is based on the concept of a statistically balanced variable.

Below we consider the circuit $S_f$, implying that the results obtained are also applicable to $S_h$ and $S_{f \glue h}$.
Keeping in mind all notions introduced above, we define a uniform distribution on~$\{0,1\}^n$, and for an arbitrary $v \in V \setminus V^{in}$ consider the events $g_v = 0$ and $g_v = 1$.
Denote by $\Pr\{g_v = 0\}$ and $\Pr\{g_v = 1\}$ the probabilities of these events.

\begin{definition}[Balanced gate]
Let $v \in V \setminus V^{in}$ be arbitrary gate.
We call the gate $v$ and the corresponding variable $var(v)$ \emph{balanced} if ${\Pr\{g_v = 0\}} = {\Pr\{g_v = 1\}} =\nobreak 1/2$.
\end{definition}

The balance of an arbitrary gate $v \in V \setminus V^{in}$ can be estimated efficiently using Chernoff bound~\eqref{eq:cheb-3}.
Indeed, let $p_v = \Pr\{g_v = 1\}$, then $v$ is associated with a Bernoulli random variable $\xi_v$, which takes the value 1 if $g_v$ on the random input $\alpha \in \{0,1\}^n$ takes the value 1.
Since $E[\xi_v] = p_v$, then for any fixed $\varepsilon, \delta \in (0, 1)$ we can construct an $(\varepsilon, \delta)$-approximation of~$p_v$ using some sample of random input words of~$S_f$.
Recall (see \textit{e.g.}~\cite{KarpLuby89}) that an $(\varepsilon, \delta)$-approximation of some parameter $\nu$ is some observable quantity $\tilde{\nu}$ such that $\Pr\{|\nu - \tilde{\nu}| \leq \varepsilon\} \geq 1 - \delta$.
Then it follows from~\eqref{eq:cheb-3} that for any fixed $\varepsilon, \delta \in (0, 1)$ to obtain the $(\varepsilon, \delta)$-approximation of $p_v$, we only need to make $N \geq \left\lceil \frac{4 \, \ln(2/\delta)}{\varepsilon^2} \right\rceil$ independent observations $\xi_1, \dotsc, \xi_N$ of random variable $\xi_v$, and compute the value $\frac{1}{N} \bigsumnolim{j = 1}^N \xi_j$.
This can be done efficiently.
For example, for $\varepsilon = 0.05$ and $\delta = 0.01$, the value $\frac{1}{N} \bigsumnolim{j = 1}^N \xi_j$ gives the required approximation for any $N \geq 8478$.

\section{Experiments}

\subsection{Considered tests}

In the role of $f$ and $h$ we considered functions defined by various algorithms that sort $k$ arbitrary natural numbers represented by bit vectors of length~$l$.
Thus, we considered  $f, h \colon \{0,1\}^n \rightarrow \{0,1\}^n$, where $n = k  \, l$.
More specifically, three sorting algorithms were used: Bubble Sorting, Selection Sorting~\cite{cormen90} and Pancake Sorting~\cite{gates1979}.
The functions $f,h$ corresponding to these algorithms were specified using And-Inverter Graphs.
We applied ABC~\cite{BraytMisch10} to build a FRAIG (Functionally Reduced And-Inverter Graph)~\cite{MischFRAIG05} for each considered circuit.
The resulting circuits were used to construct the formulas $C_f, C_h, C_{f \glue h}, C_{f \glue h} \land C(\mu)$.
The corresponding tests are denoted as follows: \instance{BvP}{k,l} for Bubble vs. Pancake; \instance{BvS}{k,l} for Bubble vs. {Selection}; and \instance{PvS}{k,l} for Pancake vs. Selection.
It should be noted that the resulting test classes scale very well and give complex instances even for relatively small input lengths.
So, for example, the \instance{PvS}{10,4} test instance is beyond the power of any of the conventional state-of-the-art SAT solvers.
However, as we show below, they can be solved on a computing cluster in reasonable time using the partitionings described above.

\subsection{Experimental setup and implementation details}
In computational experiments, we used SAT solvers that ranked best in SAT competition and SAT Race of recent years: Kissat~\cite{kissat-cadical}, CaDiCaL~\cite{kissat-cadical}, and \verb/MalpleLCMDistChronoBT-DL/~\cite{maple-lcm}.
To implement the SAT partitioning strategies described above, an MPI application was written in Python.
This program was run on a computing cluster ``Academician V.M. Matrosov''\footnote{Irkutsk Supercomputer Center of SB RAS, \url{http://hpc.icc.ru}}, each computing node of which is equipped with two 18-core Intel Xeon E5-2695 v4 Broadwell processors with 128 GB RAM (thus, 36 cores per one node were harnessed).
When~constructing the Monte Carlo estimates, random samples of 10000 were used.
Up to ten computing nodes (360 cores in total) were used in the experiments.

We note here separately that apart from the solution time in seconds, we also measured the number of conflicts generated during SAT solving, since the number of conflicts can be considered as an estimate of the size of the search tree that the SAT solver explores when solving a specific instance.
Indeed, the operation of the DPLL algorithm~\cite{dpll-2,dpll-1} corresponds to an ordinary binary tree, each branch of which, for an unsatisfiable test, ends in a conflict.
In the case of CDCL, due to periodic restarts, instead of a tree, we are dealing with a forest.
The number of paths in such a forest, in fact, can be considered as the complexity of a specific unsatisfiability proof that the SAT solver builds for the instance in question.

\subsection{Main experimental results}

For each series of tests \Instance{BvP}, \Instance{BvS}, and \Instance{PvS}, we generated and solved families of LEC instances of increasing complexity, corresponding to the following parameters: $k \in \{7, 8, 9, 10\}$, $l = 4$.
\cref{tab:single-thread-times} shows the time used to solve these instances on one cluster one using one thread.
We only included hard instances that were solved in more than three hours.
The notation \emph{>3d} means that the corresponding instance was not solved in three days and the computation was interrupted. 
Also, since the Maple solver, if interrupted, does not output the number of generated conflicts, the corresponding data is omitted.
In the next series of experiments, SAT partitionings were built in accordance with~\cref{constr:1} and~\cref{constr:2}.

\begin{table}[!htb]    
    \centering
    \caption{Time (in seconds) and number of conflicts used by sequential SAT solvers on considered instances}
    \label{tab:single-thread-times}
\begin{tabular}{
    r
    rr
    rr
    rr
}
    \toprule
    & \multicolumn{2}{c}{Kissat} & \multicolumn{2}{c}{Cadical} & \multicolumn{2}{c}{Maple} \\
    \thead{Instance} & \thead{Time} & \thead{Confl.} & \thead{Time} & \thead{Confl.} & \thead{Time} & \thead{Confl.} \\
    \midrule
    $BvP_{9,4}$ & \numprint{11316} & $9 \cdot 10^7$ & \numprint{25710} & $9 \cdot 10^7$ & \numprint{86389} & $18 \cdot 10^7$ \\
    $BvP_{10,4}$ & \numprint{154410} & $62 \cdot 10^7$ & \numprint{246294} & $47 \cdot 10^7$ & >3d & --- \\
    $BvS_{9,4}$ & \numprint{3054} & $27 \cdot 10^6$ & \numprint{5478} & $28 \cdot 10^6$ & \numprint{8564} & $31 \cdot 10^6$ \\
    $BvS_{10,4}$ & \numprint{14272} & $97 \cdot 10^6$ & \numprint{36048} & $109 \cdot 10^6$ & \numprint{57964} &	$130 \cdot 10^6$ \\
    $PvS_{9,4}$ & \numprint{64437} & $28 \cdot 10^7$ & \numprint{108025} & $26 \cdot 10^7$ & >3d & --- \\
    $PvS_{10,4}$ & >3d & $73 \cdot 10^7$ & >3d & $40 \cdot 10^7$ & >3d & --- \\
    \bottomrule
\end{tabular}

\end{table}

In the case of~\cref{constr:1}, we used a partitioning of the set $X^{in}$ into disjoint pairs, triples, and quadruples of variables. 
Function~$\lambda_1^j$ used in~\cref{constr:1} was selected experimentally as follows ($\lambda_2^j = \lnot \lambda_1^j$ in all cases): 
\begin{itemize}
    \item 2-XOR: for pairs, function $\lambda_1^j$, $j \in \{1, \dotsc, n/2\}$ was defined by formula $\phi_1^j = (x_1^j \xor x_2^j)$;
    \item 3-MAJ: for triples, function $\lambda_1^j$, $j \in \{1, \dotsc, n/3\}$ was defined by formula $\phi_1^j = \com{majority}(x_1^j, x_2^j, x_3^j)$, where $\com{majority}(a, b, c) = a \land b \vee a \land c \vee b \land c$;
    \item 4-BENT: for quadruples, $\lambda_1^j$, $j \in \{1, \dotsc, n/4\}$ was defined by the bent function~\cite{tokareva2015} of four variables according to the formula $\phi_1^j = x_1^j \land x_3^j \xor x_2^j \land x_4^j$.
\end{itemize}
Note that partitioning into pairs produces a large number of subproblems, which, although simple, result in much higher estimates of the total solving time than for triples and quadruples.

\begin{table*}[!tb]
    \tiny
    \centering    
    \caption{Experimental results for solving decompositions of LEC instances \instance{BvP}{9,4}, \instance{BvP}{10,4}, \instance{PvS}{9,4}}
    \label{tab:main}
    
\setlength{\tabcolsep}{3pt}
\newcommand{\supmidrule}{\cmidrule(lr){1-12}}
\newcommand{\submidrule}{\cmidrule(lr){4-12}}

\begin{tabular}{
    c
    c
    c
    c
    c
    r@{}c@{}l
    r@{}c@{}l
    r
}
\toprule
    \thead{Instance} & \thead{Dec. \\ type} & \thead{Dec. \\ size} & \thead{Sample\\ size} & \thead{Solver} & \multicolumn{3}{c}{\thead{Avg $\pm$ sd \\ time, s}} & \multicolumn{3}{c}{\thead{Avg $\pm$ sd \\ confl.}} & \thead{Wall \\ time, s} \\

\midrule
    \instance{BvP}{9,4}
    & 2-XOR & \numprint{262144}
      & \numprint{10000} & Cadical & 19 & $\,\pm\,$ & 4 & $190 \cdot 10^{3}$ & $\,\pm\,$ & $34 \cdot 10^{3}$ & --- \\
    &&& \numprint{10000} & Kissat & 21 & $\,\pm\,$ & 5 & $259 \cdot 10^{3}$ & $\,\pm\,$ & $57 \cdot 10^{3}$ & --- \\
    &&& \numprint{10000} & Maple & 114 & $\,\pm\,$ & 19 & $411 \cdot 10^{3}$ & $\,\pm\,$ & $52 \cdot 10^{3}$ & --- \\
\submidrule
    & 3-MAJ & \numprint{4096}
      & \numprint{4096} & Cadical & 355 & $\,\pm\,$ & 109 & $\numprint{2218} \cdot 10^{3}$ & $\,\pm\,$ & $542 \cdot 10^{3}$ & \numprint{8087} \\
    &&& \numprint{4096} & Kissat & 276 & $\,\pm\,$ & 76 & $\numprint{2808} \cdot 10^{3}$ & $\,\pm\,$ & $709 \cdot 10^{3}$ & \numprint{6286} \\
    &&& \numprint{4096} & Maple & 797 & $\,\pm\,$ & 216 & $\numprint{2592} \cdot 10^{3}$ & $\,\pm\,$ & $635 \cdot 10^{3}$ & \numprint{18132} \\
\submidrule
      & 4-BENT & 512
      & 512 & Cadical & \numprint{2214} & $\,\pm\,$ & \numprint{1149} & $10 \cdot 10^{6}$ & $\,\pm\,$ & $5 \cdot 10^{6}$ & \numprint{6299} \\
    &&& 512 & Kissat & \numprint{1168} & $\,\pm\,$ & \numprint{447} & $12 \cdot 10^{6}$ & $\,\pm\,$ & $5 \cdot 10^{6}$ & \numprint{3323} \\
    &&& 512 & Maple & \numprint{4273} & $\,\pm\,$ & \numprint{1923} & $11 \cdot 10^{6}$ & $\,\pm\,$ & $5 \cdot 10^{6}$ & \numprint{12153} \\
\submidrule
    & 4+4 & 256
      & 256 & Cadical & \numprint{1358} & $\,\pm\,$ & \numprint{540} & $9 \cdot 10^{6}$ & $\,\pm\,$ & $3 \cdot 10^{6}$ & \numprint{1931} \\
    &&& 256 & Kissat & \numprint{884} & $\,\pm\,$ & \numprint{323} & $11 \cdot 10^{6}$ & $\,\pm\,$ & $4 \cdot 10^{6}$ & \numprint{1258} \\
    &&& 256 & Maple & \numprint{2286} & $\,\pm\,$ & \numprint{836} & $9 \cdot 10^{6}$ & $\,\pm\,$ & $3 \cdot 10^{6}$ & \numprint{3252} \\

\supmidrule
    \instance{BvP}{10,4}
    & 3-MAJ & \numprint{16384}
      & \numprint{10000} & Cadical & \numprint{1752} & $\,\pm\,$ & \numprint{886} & $7 \cdot 10^{6}$ & $\,\pm\,$ & $3 \cdot 10^{6}$ & --- \\
    &&& \numprint{10000} & Kissat & \numprint{1072} & $\,\pm\,$ & \numprint{476} & $9 \cdot 10^{6}$ & $\,\pm\,$ & $4 \cdot 10^{6}$ & --- \\
\submidrule
    & 4-BENT & \numprint{1024}
      & \numprint{1024} & Cadical & \numprint{22397} & $\,\pm\,$ & \numprint{15010} & $63 \cdot 10^{6}$ & $\,\pm\,$ & $30 \cdot 10^{6}$ & \numprint{127415} \\
    &&& \numprint{1024} & Kissat & \numprint{10472} & $\,\pm\,$ & \numprint{5667} & $77 \cdot 10^{6}$ & $\,\pm\,$ & $35 \cdot 10^{6}$ & \numprint{59571} \\
\submidrule
    & 4+4 & 256
      & 256 & Cadical & \numprint{45494} & $\,\pm\,$ & \numprint{12845} & $102 \cdot 10^{6}$ & $\,\pm\,$ & $23 \cdot 10^{6}$ & \numprint{64703} \\
    &&& 256 & Kissat & \numprint{18155} & $\,\pm\,$ & \numprint{6451} & $124 \cdot 10^{6}$ & $\,\pm\,$ & $37 \cdot 10^{6}$ & \numprint{25821} \\

\supmidrule
    \instance{PvS}{9,4}
    & 3-MAJ & \numprint{4096}
      & \numprint{4096} & Cadical & \numprint{941} & $\,\pm\,$ & \numprint{317} & $4 \cdot 10^{6}$ & $\,\pm\,$ & $1 \cdot 10^{6}$ & \numprint{21422} \\
    &&& \numprint{4096} & Kissat & \numprint{766} & $\,\pm\,$ & \numprint{229} & $6 \cdot 10^{6}$ & $\,\pm\,$ & $2 \cdot 10^{6}$ & \numprint{17443} \\
\submidrule
    & 4-BENT & 512
      & 512 & Cadical & \numprint{6491} & $\,\pm\,$ & \numprint{3512} & $20 \cdot 10^{6}$ & $\,\pm\,$ & $10 \cdot 10^{6}$ & \numprint{18462} \\
    &&& 512 & Kissat & \numprint{3831} & $\,\pm\,$ & \numprint{1641} & $27 \cdot 10^{6}$ & $\,\pm\,$ & $10 \cdot 10^{6}$ & \numprint{10898} \\
\submidrule
    & 6+6 & \numprint{4096}
      & \numprint{4096} & Cadical & \numprint{421} & $\,\pm\,$ & \numprint{492} & $2 \cdot 10^{6}$ & $\,\pm\,$ & $2 \cdot 10^{6}$ & \numprint{9588} \\
    &&& 4096 & Kissat & \numprint{390} & $\,\pm\,$ & \numprint{465} & $4 \cdot 10^{6}$ & $\,\pm\,$ & $4 \cdot 10^{6}$ & \numprint{8869} \\

\bottomrule
\end{tabular}
	
\end{table*}

In \cref{tab:main}, columns \emph{Total subprobs} and \emph{Solved subprobs} contain information about the total number of subproblems in the SAT partitioning (column \emph{Decomposition Type}) and the number of subproblems solved in the experiment using five nodes of a computing cluster (180 cores).
If the values in these columns are equal, then it means that all subproblems from the corresponding SAT partitioning have been solved.
In these cases we compute the exact value of $E[\xi_\Pi]$ and the standard deviation of this value, both in seconds and in the number of conflicts.
Otherwise, if the number of solved subproblems is smaller than the total number of subproblems, then we present statistical estimates of these values, calculated using the specified sample size.
The column \emph{Wall clock time} shows the time used to solve the corresponding partitioning: it corresponds to the time the user would need to wait in order to solve the LEC instance using the said partitioning.
If the number of solved subproblems (i.e. the sample size) is smaller than the total number of subproblems, this value is omitted.

In the experiments for~\cref{constr:2}, we used cubes built from variables corresponding to balanced gates (we refer to such variables and cubes as to \emph{balanced} ones).
More precisely, for each circuit $S_f$ and $S_h$, the balance of each gate was calculated in the 
manner described above:  
in fact we constructed (using Chernoff bound) 
$(\varepsilon,\delta)$ approximations of probability 
$\Pr\{g_v =1\}$ with $\varepsilon=0.05$ and $\delta=0.01$. 
Then, from each circuit we chose $q$ gates with this estimation closest to $1/2$, and built the decomposition set $B = \{\tilde{x}_1, \dotsc, \tilde{x}_{2q}\}$ from the obtained variables.
The considered SAT partitioning (denoted as $q$+$q$) is represented by all possible cubes $\tilde{x}_1^{\alpha_1} \land \cdots \land \tilde{x}_{2q}^{\alpha_{2q}}$.
The experiments were carried out for $q \in \{4, 5, 6\}$.

In the context of all that has been said above, one of the main issues is the accuracy of the resulting estimates of~$E[\xi_\Pi]$.
The main factor that negatively affects the accuracy is the magnitude of~$Var(\xi_\Pi)$.
The data in \cref{tab:main} implies that the two proposed SAT partitioning constructions give a relatively small standard deviation and, as a result, the resulting estimates are very accurate.


Moreover, as shown below, in order to obtain relatively accurate estimates of~$E[\xi_\Pi]$, it is sufficient to use samples whose size is significantly smaller than the total size of the considered SAT partitioning.
The aforesaid is confirmed by the experimental data shown in~\cref{fig:min-max-BvP-9-4-cadical-triples} and \cref{fig:min-max-BvP-9-4-cadical-4+4}, which demonstrate the dependence of the accuracy of the estimate of~$E[\xi_\Pi]$ on the size of the random sample.
In~\cref{fig:min-max-BvP-9-4-cadical-triples} we present the plot for partitioning of the set~$X^{in}$ into triples (3-MAJ) for \cref{constr:1}, and in \cref{fig:min-max-BvP-9-4-cadical-4+4} into balanced cubes~(4+4) for \cref{constr:2}.
In both cases we used the LEC problem instance \instance{BvP}{9,4} and the solver CaDiCaL.


For each value of the size of random sample~$N$ we generated $P = 1000$ random samples of size~$N$ and calculated the sample means $(\smash{\hat{\xi}}^1, \dotsc, \smash{\hat{\xi}}^P)$, where each $\smash{\hat{\xi}^r} = \frac{1}{N} \bigsumnolim{j=1}^{N} \xi_j, r\in \{1,\ldots,P\}$.
Additionally, we calculated the mean of sample means $\Xi(N) = \frac{1}{P} \bigsumnolim{r=1}^{P} \smash{\hat{\xi}}^r$, and also chose the minimal $M_*(N)$ and maximal $M^*(N)$ values.
Next, we normalized all values by dividing them by $E[\xi_\Pi]$.

\begin{figure}[!htb]
    \centering
    \includegraphics{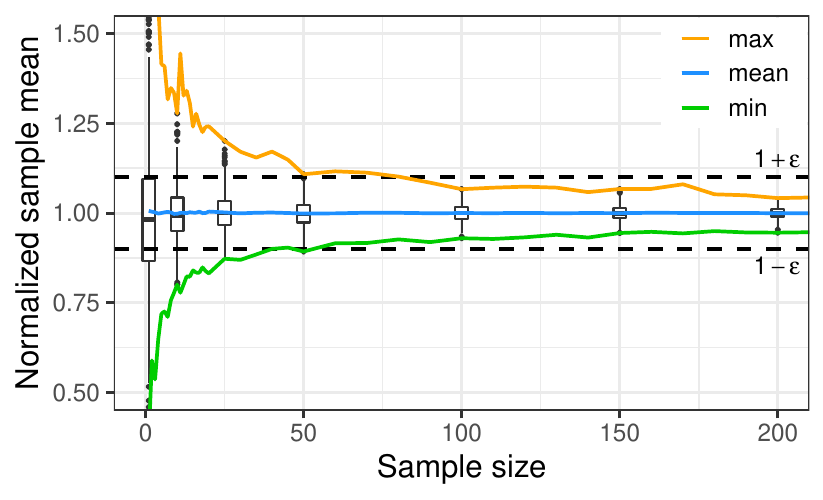}
    \caption{Distributions of sample means for different sample sizes~$N$ on the 3-MAJ decomposition of \instance{BvP}{9,4} instance}
    \label{fig:min-max-BvP-9-4-cadical-triples}
\end{figure}

\begin{figure}[!htb]
    \centering
    \includegraphics{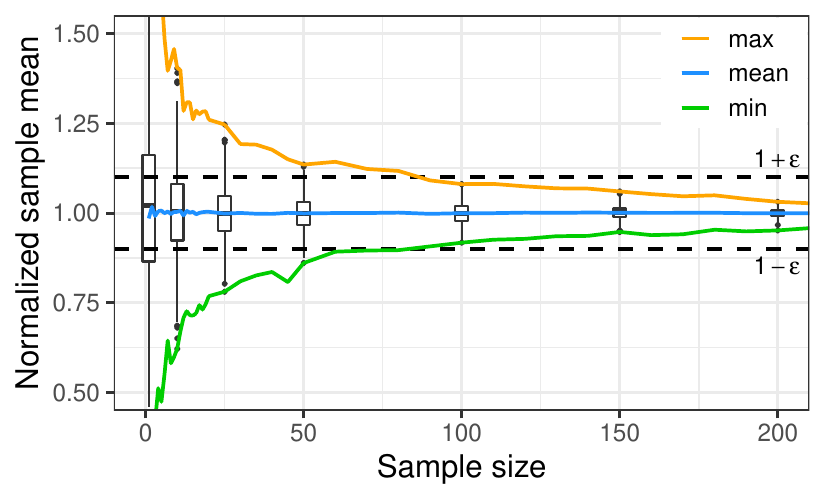}
    \caption{Distributions of sample means for different sample sizes~$N$ on the decomposition into balanced cubes (4+4) of \instance{BvP}{9,4} instance}
    \label{fig:min-max-BvP-9-4-cadical-4+4}
\end{figure}

In \cref{fig:min-max-BvP-9-4-cadical-triples} and \cref{fig:min-max-BvP-9-4-cadical-4+4} the horizontal axis shows the varying size of random sample~$N$.
For some values of~$N$, the corresponding distributions of sample means are shown using boxplots.
Additionally, the plots contain the following normalized lines:
\begin{itemize}
    \item $\Xi(N) / E[\xi_\Pi]$ (blue line, middle);
    \item $M_*(N) / E[\xi_\Pi]$ (green line, bottom);
    \item $M^*(N) / E[\xi_\Pi]$ (orange line, top);
    \item $(1 \pm \varepsilon)$ for $\varepsilon = 0.1$ (black dashed lines).
\end{itemize}




From the plots in \cref{fig:min-max-BvP-9-4-cadical-triples}--\cref{fig:min-max-BvP-9-4-cadical-4+4} it can be seen that on the considered class of tests, the calculated sample mean $\smash{\hat{\xi}}$ gives a fairly accurate estimate of~$E[\xi_\Pi]$ even when the sample size~$N$ is significantly smaller than the total size
of the considered partitioning.

\begin{table*}[t]
    \tiny
    \centering   
    
    \caption{Experimental results for solving decompositions on \instance{PvS}{10,4} instance}
    \label{tab:exp-pvs-10-4}
    \setlength{\tabcolsep}{3pt}
    \begin{tabular}{
        c
        c
        c
        c
        r@{}c@{}l
        r@{}c@{}l
        r
    }
        \toprule
        \thead{Dec. \\ type} & \thead{Dec.\\ size} & \thead{Sample \\ size} & \thead{Solver} & \multicolumn{3}{c}{\thead{Avg. $\pm$ sd \\ time, s}} & \multicolumn{3}{c}{\thead{Avg. $\pm$ sd \\ conflicts}} & \thead{Wall clock \\ time, s} \\
        \midrule

        2-XOR & \numprint{1048576} 
        & \numprint{10000} & Cadical
            & 167 & $\,\pm\,$ & 57
            & $1011  \, 10^{3}$ & $\,\pm\,$ & $240  \, 10^{3}$
            & --- \\
        &
        & \numprint{10000} & Kissat
            & 185 & $\,\pm\,$ & 64
            & $1520  \, 10^{3}$ & $\,\pm\,$ & $393  \, 10^{3}$
            & --- \\
        \midrule
        3-MAJ & \numprint{16384}
        & \numprint{10000} & Cadical
            & $5  \, 10^{3}$ & $\,\pm\,$ & $3  \, 10^{3}$
            & $15  \, 10^{6}$ & $\,\pm\,$ & $7  \, 10^{6}$
            & --- \\
        \midrule
        4-BENT & \numprint{1024}
        & \numprint{1024} & Cadical
            & $83  \, 10^{3}$ & $\,\pm\,$ & $48  \, 10^{3}$
            & $111  \, 10^{6}$ & $\,\pm\,$ & $47  \, 10^{6}$
            & \numprint{474922} \\
        &
        & \numprint{1024} & Kissat
            & $30  \, 10^{3}$ & $\,\pm\,$ & $13  \, 10^{3}$
            & $129  \, 10^{6}$ & $\,\pm\,$ & $43  \, 10^{6}$
            & \numprint{171182} \\
        \midrule
        6+6 & \numprint{4096}
        & \numprint{4096} & Kissat
            & $3  \, 10^{3}$ & $\,\pm\,$ & $19  \, 10^{3}$
            & $10  \, 10^{6}$ & $\,\pm\,$ & $56  \, 10^{6}$
            & \numprint{71168} \\
        \midrule
        4+4 & 256
        & 256 & Kissat
            & $26  \, 10^{3}$ & $\,\pm\,$ & $74  \, 10^{3}$
            & $68  \, 10^{6}$ & $\,\pm\,$ & $59  \, 10^{6}$
            & \numprint{37606} \\
        
        \bottomrule
    \end{tabular}
\end{table*}

As mentioned above, problems from the considered class with input length $n = k  \, l = 40$ are already extremely complex.
However, problems \instance{BvP}{10,4} and \instance{BvS}{10,4} were solved using five nodes (180 cores) of the computing cluster in reasonable time (as can be seen in \cref{tab:main}).
Since the obtained estimates of hardness for \instance{PvS}{10,4} were significantly higher than for \instance{BvP}{10,4} and \instance{BvS}{10,4}, we used ten cluster nodes (360 cores), CaDiCaL and Kissat solvers to solve them (Maple showed significantly worse results in previous experiments). 
Results are shown in~\cref{tab:exp-pvs-10-4}.

\subsection{Experiments with unbalanced cubes}
We emphasize that in \cref{constr:2} we use cubes built from the most balanced variables, hoping that the corresponding SAT partitioning will have a small variance.
And this hypothesis, as follows from \cref{tab:main}, is generally confirmed.
Of interest is the question of what will happen if we build cubes using the most \emph{unbalanced} variables instead of balanced ones (\ie unbalanced cubes)?
On the one hand, $Var(\xi_\Pi)$ should be significantly higher, but, on the other hand, many subproblems in the SAT partitioning can be extremely simple.

We have carried out the corresponding experiments.
It turned out that when using unbalanced cubes, in many cases even the CNF formulas $\tilde{x}_1^{\alpha_1} \land \cdots \land \tilde{x}_{2q}^{\alpha_{2q}} \land C_{f \glue h}$ are unsatisfiable, \ie formulas that do not even include the term~$C(\mu)$ which encodes the miter.
And the corresponding instances are easy for the SAT solver.
However, the final SAT partitioning will necessarily contain abnormally hard formulas, the hardness of which is comparable with the hardness of SAT for $C_{f \glue h} \land C(\mu)$ (\ie for the case without using partitioning).

Let us denote as~$G^*$ such an abnormally hard cube. 
Using Chernoff bound, we estimated the number of satisfying assignments of CNF formula $G^* \land C_{f \glue h}$. 
We conducted several such experiments with hard cubes, and the typical case is: for a hard cube $G^{*}$ of realistic size (say, $\leq 40$) the estimation of the number of satisfying assignments of formula $G^{*} \land C_{f \glue h}$ was greater than $0.9  \, 2^n$ (with tolerance $\varepsilon = 0.01$ and confidence level $1-\delta = 0.99$, \textit{w.r.t.} Chernoff bound, $N = 211933$ was used).
Thus, these results confirm the assumption put forward above about a direct relationship between the number of satisfying assignments of formula $G \land C_{f \glue h}$ and the hardness of formula $G \land C_{f \glue h} \land C(\mu)$.

\section{Conclusion}

In this paper, we explored how to estimate the hardness of SAT encodings for the Logical Equivalence Checking problem. 
One of our basic observations in this context is that we can estimate the hardness of a SAT encoding of LEC using some SAT partitioning. 
More specifically, we introduce the concept of hardness of a SAT instance \emph{w.r.t.} a SAT partitioning and a SAT solver $\Oracle$. 
We show that such estimates can be constructed using probabilistic algorithms based on the Monte Carlo method. 
The accuracy of this kind of estimates depends on the probabilistic characteristics of a specially defined random variable which is associated with a particular SAT partitioning. 
We propose two constructions of SAT partitionings, in relation to which we present arguments for the good accuracy of the obtained estimates of hardness. 
To carry out computational experiments, we use a class of LEC instances, where circuits are represented as And-Inverter Graphs which define various algorithms for sorting $k$ natural numbers with bit length~$l$.
The hardness of such tests scales well due to the selection of values $k, l$, and one can generate extremely hard LEC instances already for circuits with $n = k  \, l = 40$ inputs. 
In general, it is not possible to accurately predict the running time of a consecutive SAT solver on some of these tests.
However, we estimate the hardness of such tests \emph{w.r.t.} the proposed SAT partitioning. 
The estimates obtained indicate that the corresponding LEC instances can be solved in parallel using a reasonable amount of computational resources. 
We confirm these conclusions and the accuracy of the estimates obtained by solving the corresponding instances on a computing cluster. 
We also formulate a hypothesis about a direct relationship between the hardness of subproblems in the SAT partitioning and the number of satisfying assignments of special satisfiable CNF formulas associated with the original Boolean circuits, and we demonstrate that this hypothesis is true for circuits considered in our experiments.

Future work may include the development of optimization algorithms similar to ones described in \cite{CP2021,Zaikin2021,Semenov2021-bb} for finding cubes over auxiliary variables with good statistical estimation of hardness of LEC \emph{w.r.t.} the corresponding SAT partitioning.

\section*{Acknowledgements}
The research is supported by Huawei (grant TC20211213625).

\bibliographystyle{splncs03}


\end{document}